\begin{document}
\title{Fooling Partial Dependence via Data Poisoning}
\titlerunning{Fooling Partial Dependence via Data Poisoning}
\toctitle{Fooling Partial Dependence via Data Poisoning}
\author{Hubert Baniecki \Letter \and Wojciech Kretowicz \and Przemyslaw Biecek}
\authorrunning{H. Baniecki et al.}
\tocauthor{Hubert~Baniecki, Wojciech~Kretowicz, and Przemyslaw~Biecek}
%
\institute{Warsaw University of Technology, Warsaw, Poland\\
\email{\{hubert.baniecki.stud,wojciech.kretowicz.stud,przemyslaw.biecek\}@pw.edu.pl}}
\maketitle              
\begin{abstract}
Many methods have been developed to understand complex predictive models and high expectations are placed on post-hoc model explainability. It turns out that such explanations are not robust nor trustworthy, and they can be fooled. This paper presents techniques for attacking Partial Dependence (plots, profiles, PDP), which are among the most popular methods of explaining any predictive model trained on tabular data. We showcase that PD can be manipulated in an adversarial manner, which is alarming, especially in financial or medical applications where auditability became a must-have trait supporting black-box machine learning. The fooling is performed via poisoning the data to bend and shift explanations in the desired direction using genetic and gradient algorithms. We believe this to be the first work using a~genetic algorithm for manipulating explanations, which is transferable as it generalizes both ways: in a~model-agnostic and an~explanation-agnostic manner.

\keywords{explainable AI \and adversarial ML \and interpretability}
\end{abstract}
\section{Introduction}\label{sec:introduction}

Although supervised machine learning became state-of-the-art solutions to many predictive problems, there is an emerging discussion on the underspecification of such methods which exhibits differing model behaviour in training and practical setting \citep{underspecification}. This is especially crucial when proper accountability for the systems supporting decisions is required by the domain \citep{Lipton2018, Miller2019, Rudin2019}. Living with black-boxes, several explainability methods were presented to help us understand models' behaviour \citep{ale, friedman-gbm-pdp, goldstein-ice, shap, lime}, many are designed specifically for deep neural networks \citep{lrp, deeplearning, input-gradient, integrated-gradient}. Explanations are widely used in practice through their (often estimation-based) implementations available to machine learning practitioners in various software contributions \citep{nn-software, dalex-python, dalex}. 

Nowadays, robustness and certainty become crucial when using explanations in the data science practice to understand black-box machine learning models; thus, facilitate rationale explanation, knowledge discovery and responsible decision-making \citep{arrieta-responsible-ai, responsible-ml}. Notably, several studies evaluate explanations \citep{adebayo-sanity-checks-saliency-maps, eval-4, eval-3, eval-1, explanation-quality-vs-model-accuracy, eval-2} showcasing their various flaws from which we perceive an existing robustness gap; in critical domains, one can call it a \textit{security breach}. Apart from promoting wrong explanations, this phenomenon can be exploited to use adversarial attacks on model explanations to achieve manipulated results. In regulated areas, these types of attacks may be carried out to deceive an auditor. 

\begin{figure}[!t]	
\centering
\includegraphics[width=0.95\columnwidth]{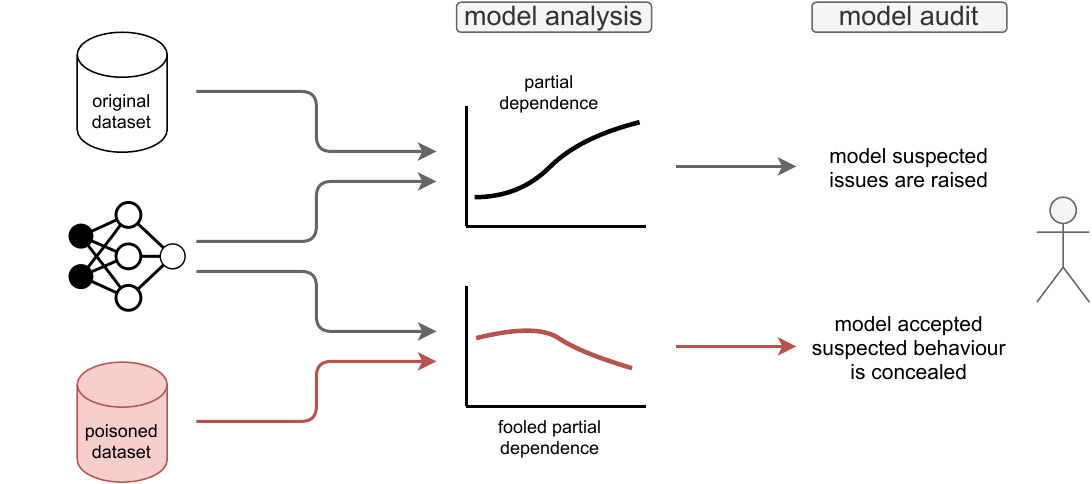}
\caption[Framework for fooling model explanations via data poisoning.]{Framework for fooling model explanations via data poisoning. The red color indicates the adversarial route, a~potential security breach, which an attacker may use to manipulate the explanation. Researchers could use this method to provide a~misleading rationale for a given phenomenon, while auditors may purposely conceal the suspected, e.g. biased or irresponsible, reasoning of a black-box model.}
\label{fig:framework}
\end{figure}  

Figure \ref{fig:framework} illustrates a process in which the developer aims to conceal the undesired behaviour of the model by supplying a poisoned dataset for model audit. Not every explanation is equally good---just as models require proper performance validation, we need similar assessments of explainability methods. In this paper, we evaluate the robustness of Partial Dependence (PD) \citep{friedman-gbm-pdp}, moreover highlight the possibility of adversarial manipulation of PD (see Figures \ref{fig:heart-age} \& \ref{fig:heart-sex} in the latter part of the paper). We summarize the contributions as follows: 

\paragraph{(1)} We introduce a novel concept of using a \emph{genetic algorithm} for manipulating model explanations. This allows for a convenient generalization of the attacks in a model-agnostic and explanation-agnostic manner, which is not the case for most of the related work. Moreover, we use a gradient algorithm to perform fooling via data poisoning efficiently for neural networks. 

\paragraph{(2)} We explicitly target PD to highlight the potential of their adversarial manipulation, which was not done before. Our method provides a sanity check for the future use of PD by responsible machine learning practitioners. Evaluation of the constructed methods in extensive experiments shows that model complexity significantly affects the magnitude of the possible explanation manipulation.

\section{Related Work}\label{sec:related-work}

In the literature, there is a considerable amount of attacks on model explanations specific to deep neural networks \citep{dombrowski-manipulated-explanations, ghorbani-fragile-nn-interpretability, heo-fooling-nn-manipulation, kindermans-unreliable-saliency-maps, zhang-interpretable-dl-under-fire}. At their core, they provide various algorithms for fooling neural network interpretability and explainability, mainly of image-based predictions. Such explanations are commonly presented through saliency maps \citep{saliency1}, where each model input is given its attribution to the prediction \citep{lrp, gradcam, input-gradient, integrated-gradient}. Although explanations can be used to improve the adversarial robustness of machine learning models \citep{adversarial-robustness}, we target explanations instead. When considering an explanation as a function of model and data, there is a~possibility to change one of these variables to achieve a different result~\citep{zhao2019causal}. Heo et al. \citep{heo-fooling-nn-manipulation} and Dimanov et al. \citep{attack-model-fairness} propose fine-tuning a neural network to undermine its explainability capabilities and conceal model unfairness. The assumption is to alter the model's parameters without a drop in performance, which can be achieved with an objective function minimizing the distance between explanations and an arbitrarily set target. Note that \citep{attack-model-fairness} indirectly change partial dependence while changing the model. Aivodji et al. \citep{fairwashing} propose creating a~surrogate model aiming to approximate the unfair black-box model and explain its predictions in a fair manner, e.g. with relative variable dependence. 

Alternate idea is to fool fairness and explainability via data change since its (background) distribution greatly affects interpretation results~\citep{janzing2020feature, kindermans-unreliable-saliency-maps}. Solans et al. \citep{poisoning-fairness} and Fukuchi et al. \citep{faking-fairness} investigate concealing unfairness via data change by using gradient methods. Dombrowski et al. \citep{dombrowski-manipulated-explanations} propose an algorithm for saliency explanation manipulation using gradient-based data perturbations. In contrast, we introduce a genetic algorithm and focus on other machine learning predictive models trained on tabular data. Slack et al. \citep{slack-fooling-lime-shap} contributed adversarial attacks on post-hoc, model-agnostic explainability methods for \textit{local-level} understanding; namely LIME \citep{lime} and SHAP \citep{shap}. The proposed framework provides a way to construct a biased classifier with safe explanations of the model's predictions.

Since we focus on \emph{global-level} explanations, instead, the results will modify a view of overall model behaviour, not specific to a~single data point or image. Lakkaraju and Bastani \citep{misleading} conducted a~thought-provoking study on misleading effects of manipulated Model Understanding through
Subspace Explanations (MUSE) \citep{muse}, which provide arguments for why such research becomes crucial to achieve responsibility in machine learning use. Further, the robustness of neural networks \citep{nn-defense-1, nn-defense-2} and counterfactual explanations \citep{slack2021counterfactual} have became important, as one wants to trust black-box models and extend their use to sensitive tasks. Our experiments further extend to global explanations the indication of Jia et al. \citep{explanation-quality-vs-model-accuracy} that there is a correlation between model complexity and explanation quality.

\section{Partial Dependence}

In this paper, we target one of the most popular explainability methods for tabular data, which at its core presents the expected value of the model's predictions as a function of a selected variable. Partial Dependence, formerly introduced as \emph{plots} by Friedman \citep{friedman-gbm-pdp}, show the expected value fixed over the marginal joint distribution of other variables. These values can be relatively easily estimated and are widely incorporated into various tools for model explainability \citep{modelStudio, dalex-python, dalex, pdp, iml}. The theoretical explanation has its practical estimator used to compute the results, later visualized as a line plot showing the expected prediction for a given variable; also called \emph{profiles} \citep{biecek-ema}. PD for model $f$~and variable $c$ in a~random vector $\mathcal X$ is defined as $\mathcal{PD}_c(\mathcal X,z) \coloneqq E_{\mathcal X_{-c}}\left[f(\mathcal X^{c|=z})\right]$,
where $\mathcal X^{c|=z}$ stands for random vector $\mathcal X$, where $c$-th variable is replaced by value $z$. By $\mathcal X_{-c}$, we denote distribution of random vector $\mathcal X$ where $c$-th variable is set to a constant. We defined PD in point $z$~as the expected value of model $f$~given the $c$-th variable is set to $z$. The standard estimator of this value for data $X$ is given by the following formula $\widehat{\mathcal{PD}_c}(X,z) \coloneqq \frac{1}{N}\sum_{i=1}^N f\left(X_i^{c|=z}\right),$
where $X_i$ is the $i$-th row of the matrix $X$ and the previously mentioned symbols are used accordingly. To simplify the notation, we will use $\mathcal{PD}$, and omit $z$ and $c$~where context is clear.

\section{Fooling Partial Dependence via Data Poisoning}\label{sec:fooling}

Many explanations treat the dataset $X$ as fixed; however, this is precisely a \textit{single point of failure} on which we aim to conduct the attack. In what follows, we examine $\mathcal{PD}$ behaviour by looking at it as a function whose argument is an entire dataset. For example, if the dataset has $N$ instances and $P$ variables, then $\mathcal{PD}$ is treated as a function over $N\times P$ dimensions. Moreover, because of the complexity of black-box models, $\mathcal{PD}$ becomes an extremely high-dimensional space where variable interactions cause unpredictable behaviour. Explanations are computed using their estimators where a significant simplification may occur; thus, a slight shift of the dataset used to calculate $\mathcal{PD}$ may lead to unintended~results (for example, see \citep{hooker2007} and the references given there).

We aim to change the underlying dataset used to produce the explanation in a~way to achieve the desired change in $\mathcal{PD}$. Figure \ref{fig:framework} demonstrates the main threat of an adversarial attack on model explanation using data poisoning, which is \emph{concealing the suspected behaviour of black-box models}. The critical assumption is that an adversary has a possibility to modify the dataset arbitrarily, e.g. in healthcare and financial audit, or research review. Even if this would not be the case, in modern machine learning, wherein practice dozens of variables are used to train complex models, such data shifts might be only a~minor change that a person looking at a~dataset or distribution will not be able to identify. 

We approach fooling $\mathcal{PD}$ as an optimization problem for given criteria of attack efficiency, later called the attack loss. It originates from \citep{dombrowski-manipulated-explanations}, where a similar loss function for manipulation of local-level model explanations for an image-based predictive task was introduced. This work introduces the attack loss that aims to change the output of a global-level explanation via data poisoning instead. The explanation's weaknesses concerning data distribution and causal inference are exploited using two ways of optimizing the loss: 

\begin{itemize}
    \item \textbf{Genetic-based}\footnote{For convenience, we shorten the \textit{algorithm based on the genetic algorithm} phrase to \textit{genetic-based algorithm}.} algorithm that does not make any assumption about the model's structure -- the black-box path from data inputs to the output prediction; thus, is model-agnostic. Further, we posit that for a vast number of explanations, clearly post-hoc global-level, the algorithm does not make assumption about their structure either; thus, becomes \textit{explanation-agnostic}.
    \item \textbf{Gradient-based} algorithm that is specifically designed for models with differentiable outputs, e.g.~neural networks \citep{attack-model-fairness, dombrowski-manipulated-explanations}.
\end{itemize}
We discuss and evaluate two possible fooling strategies:
\begin{itemize}
    \item \textbf{Targeted attack} changes the dataset to achieve the closest explanation result to the predefined desired function \citep{dombrowski-manipulated-explanations, heo-fooling-nn-manipulation}.
    \item \textbf{Robustness check} aims to achieve the most distant model explanation from the original one by changing the dataset, which refers to the sanity check \citep{adebayo-sanity-checks-saliency-maps}.
\end{itemize}
For practical reasons, we define the distance between the two calculated $\mathcal{PD}$ vectors as $\|x - y\| \coloneqq \frac{1}{I}\sum_{i=1}^{I} (x_i - y_i)^{2}$, yet other distance measures may be used to extend the method.

\subsection{Attack Loss}

The intuition behind the attacks is to find a modified dataset that minimizes the attack loss. A changed dataset denoted as $X\in \mathbb{R}^{N \times P}$ is an argument of that function; hence, an optimal $X$ is a~result of the attack. Let $Z \subset \mathbb{R}$ be the set of points used to calculate the explanation. Let $T:Z\rightarrow\mathbb{R}$ be the target explanation; we write just $T$ to denote a vector over whole $Z$. Let $g_c^Z:\mathbb{R}^{N \times P}\rightarrow \mathbb{R}^{|Z|}$ be the actual explanation calculated for points in $Z$; we write $g_c$ for simplicity. Finally, let $X'\in \mathbb{R}^{N \times P}$ be the original (constant) dataset. We define the attack loss as $\mathcal{L}(X) \coloneqq \mathcal{L}^{g,\;s}(X)$, where $g$~is the explanation to be fooled, and an objective is minimized depending on the strategy of the attack, denoted as $s$. The aim is to minimize $\mathcal{L}$ with respect to the dataset $X$ used to calculate the explanation. We never change values of the explained variable $c$ in the dataset.

In the \textbf{targeted attack}, we aim to \textbf{minimize} the distance between the target model behaviour $T$ and the result of model explanation calculated on the changed dataset. We denote this strategy by $t$ and define  $\mathcal{L}^{g,\;t}(X) = \|g_c(X) - T\|.$ Since we focus on a specific model-agnostic explanation, we substitute $\mathcal{PD}$ in place of $g$ to obtain $\mathcal{L}^{\mathcal{PD},\;t}(X) = \|\mathcal{PD}_c(X) - T\|.$ This substitution can be generalized for various global-level model explanations, which rely on using a part of the dataset for computation. 

In the \textbf{robustness check}, we aim to \textbf{maximize} the distance between the result of model explanation calculated on the original dataset $g_c(X')$, and the changed one; thus, minus sign is required. We denote this strategy by $r$ and define $\mathcal{L}^{g,\;r}(X) = -\|g_c(X) - g_c(X')\|.$ Accordingly, we substitute $\mathcal{PD}$ in place of $g$ to obtain $\mathcal{L}^{\mathcal{PD},\;r}(X) = -\|\mathcal{PD}_c(X) - \mathcal{PD}_c(X')\|.$ Note that $\mathcal{L}^{g,\;s}$ may vary depending on the explanation used, specifically for $\mathcal{PD}$ it is useful to centre the explanation before calculating the distances, which is the default behaviour in our implementation: $\mathcal{L}^{\mathcal{\overline{PD}},\;r}(X) = -\|\mathcal{\overline{PD}}_c(X) - \mathcal{\overline{PD}}_c(X') \|,$ where $\mathcal{\overline{PD}}_c \coloneqq \mathcal{PD}_c(X) - \frac{1}{|Z|}\sum_{z\in Z} \mathcal{PD}_c(X,z).$ We consider the second approach of comparing explanations using centred $\mathcal{\overline{PD}}$, as it forces changes in the shape of the explanation, instead of promoting to shift the profile vertically while the shape changes insignificantly. 

\subsection{Genetic-based Algorithm}\label{subsec:genetic-method}

We introduce a novel strategy for fooling explanations based on the genetic algorithm as it is a simple yet powerful method for real parameter optimization~\citep{real-ga}. We do not encode genes conventionally but deliberately use this term to distinguish from other types of evolutionary algorithms \citep{5-evolutionary-algorithms}. The method will be invariant to the model's definition and the considered explanations; thus, it becomes model-agnostic and explanation-agnostic. These traits are crucial when working with black-box machine learning as versatile solutions are convenient. 

Fooling $\mathcal{PD}$ in both strategies include a similar genetic algorithm. The main idea is to define an individual as an instance of the dataset, iteratively perturb its values to achieve the desired explanation target, or perform the robustness check to observe the change. These individuals are initialized with a value of the original dataset $X'$ to form a population. Subsequently, the initialization ends with mutating the individuals using a higher-than-default variance of perturbations. Then, in each iteration, they are randomly crossed, mutated, evaluated with the attack loss, and selected based on the loss values. \textbf{Crossover} swaps columns between individuals to produce new ones, which are then added to the population. The number of swapped columns can be randomized; also, the number of parents can be parameterized. \textbf{Mutation} adds Gaussian noise to the individuals using scaled standard deviations of the variables. It is possible to constrain the data change into the original range of variable values; also keep some variables unchanged. \textbf{Evaluation} calculates the loss for each individual, which requires to compute explanations for each dataset. \textbf{Selection} reduces the number of individuals using rank selection, and elitism to guarantee several best individuals to remain in the next population.

We considered the crossover through an exchange of rows between individuals, but it might drastically shift the datasets and move them apart. Additionally, a worthy mutation is to add or subtract whole numbers from the integer-encoded (categorical) variables. We further discuss the algorithm's details in the Supplementary material. The introduced attack is model-invariant because no derivatives are needed for optimization, which allows evaluating explanations of black-box models. While we found this method a sufficient generalization of our framework, there is a~possibility to perform a more efficient optimization assuming the prior knowledge concerning the structure of model and explanation.

\subsection{Gradient-based Algorithm}\label{subsec:gradient-method}

Gradient-based methods are state-of-the-art optimization approaches, especially in the domain of deep neural networks \citep{deeplearning}. This algorithm's main idea is to use gradient descent to optimize the attack loss, considering the differentiability of the model's output with respect to input data. Such assumption allows for a faster and more accurate convergence into a local minima using one of the stochastic optimizers; in our case, Adam \citep{adam}. Note that the differentiability assumption is with respect to input data, not with respect to the model's parameters. We shall derive the gradients $\nabla_{X_{-c}} \mathcal{L}^{g,\;s}$ for fooling explanations based on their estimators, not the theoretical definitions. This is because the input data is assumed to be a~random variable in a~theoretical definition of $\mathcal{PD}$, making it impossible to calculate a derivative over the input dataset. In practice, we do not derive our method directly from the definition as the estimator produces the explanation. 

Although we specifically consider the usage of neural networks because of their strong relation to differentiation, the algorithm's theoretical derivation does not require this type of model. For brevity, we derive the theoretical definitions of gradients $\nabla_{X_{-c}} \mathcal{L}^{\mathcal{PD},\;t}$, $\nabla_{X_{-c}} \mathcal{L}^{\mathcal{PD},\;r}$, and $\nabla_{X_{-c}} \mathcal{L}^{\mathcal{\overline{PD}},\;r}$ in the Supplementary material. Overall, the gradient-based algorithm is similar to the genetic-based algorithm in that we aim to iteratively change the dataset used to calculate the explanation. Nevertheless, its main assumption is that the model provides an interface for the differentiation of output with respect to the input, which is not the case for black-box models. 

\section{Experiments}\label{sec:experiments}

\paragraph{Setup.} We conduct experiments on two predictive tasks to evaluate the algorithms and conclude with illustrative scenario examples, which refer to the framework shown in Figure \ref{fig:framework}. The first dataset is a synthetic regression problem that refers to the Friedman's work \citep{friedman-gbm-pdp} where inputs $X$ are independent variables uniformly distributed on the interval $[0, 1]$, while the target $y$ is created according to the formula: $y(X) = 10 \sin(\pi \boldsymbol{\cdot} X_1 \boldsymbol{\cdot} X_2) + 20 (X_3 - 0.5)^{2} + 10 X_4 + 5  X_5.$ Only $5$ variables are actually used to compute $y$, while the remaining are independent of. We refer to this dataset as \texttt{friedman} and target explanations of the variable~$X_1$. The second dataset is a real classification task from UCI \citep{UCI}, which has $5$~continuous variables, $8$~categorical variables, and an evenly-distributed binary target. We refer to this dataset as \texttt{heart} and target explanations of the variable \texttt{age}. Additionally, we set the discrete variables as constant during the performed fooling because we mainly rely on incremental change in the values of continuous variables, and categorical variables are out of the scope of this work. 

\paragraph{Results.} Figure \ref{fig:experiments-gradient} present the main result of the paper, which is that PD can be manipulated. We use the gradient-based algorithm to change the explanations of feedforward neural networks via data poisoning. The targeted attack aims to arbitrarily change the monotonicity of PD, which is evident in both predictive tasks. The robustness check finds the most distant explanation from the original one. We perform the fooling 30 times for each subplot, and the Y-axis denotes the model's architecture: layers$\times$neurons. We observe that PD explanations are especially vulnerable in complex models. 

Next, we aim to evaluate the PD of various state-of-the-art machine learning models and their complexity levels; we denote: Linear Model (LM), Random Forest (RF), Gradient Boosting Machine (GBM), Decision Tree (DT), K-Nearest Neighbours (KNN), feedforward Neural Network (NN). The model-agnostic nature of the genetic-based algorithm allows this comparison as it might be theoretically and/or practically impossible to differentiate the model's output with respect to the input; thus, differentiate the explanations and loss.

\begin{figure}[!ht]
  \centering
  \includegraphics[width=0.8\columnwidth]{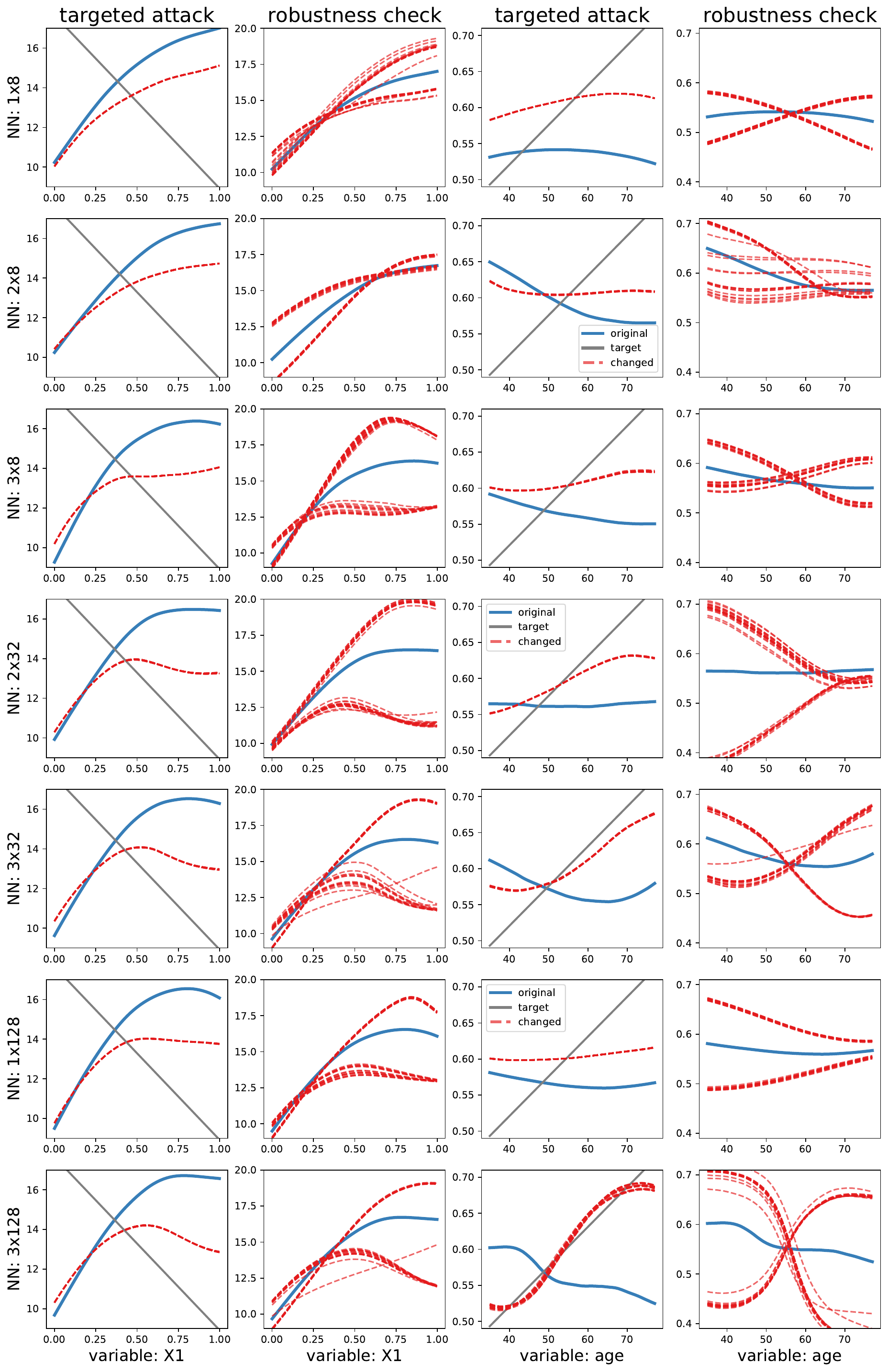}
  \caption{Fooling Partial Dependence of neural network models (rows) fitted to the \texttt{friedman} and \texttt{heart} datasets (columns). We performed multiple randomly initiated gradient-based fooling algorithms on the explanations of variables $X_1$ and \texttt{age} respectively. The blue line denotes the original explanation, the red lines are the fooled explanations, and in the targeted attack, the grey line denotes the desired target. We observe that the explanations' vulnerability greatly increases with model complexity. Interestingly, the algorithm seems to converge to two contrarary optima when no target is provided.}\label{fig:experiments-gradient}
\end{figure}

\clearpage

\begin{table}[ht]
  \setlength{\tabcolsep}{4pt}      
  \centering
  \caption{Attack loss values of the robustness checks for Partial Dependence of various machine learning models (\textbf{top}), and complexity levels of tree-ensembles (\textbf{bottom}). Each value corresponds to the scaled distance between the original explanation and the changed one. We perform the fooling 6 times and report the $\text{mean} \pm \text{sd}$. We observe that the explanations’ vulnerability increases with GBM complexity.}
  \begin{tabular}{cccccccc}
    \toprule
    \backslashbox{Task}{Model} & LM & RF & GBM & DT & KNN & NN & SVM \\
    \midrule
    \texttt{friedman}  & $0_{\pm 0}$ & $152_{\pm 76}$ & $127_{\pm 71}$ & $332_{\pm 172}$ & $164_{\pm 61}$ & $269_{\pm 189}$ & $576_{\pm 580}$\\ 
    \midrule
    \texttt{heart}  & $2_{\pm 3}$ & $20_{\pm 5}$ & $77_{\pm 28}$ & $798_{\pm 192}$ & $133_{\pm 21}$ & $501_{\pm 52}$ & $451_{\pm 25}$\\ 
    \bottomrule
  \end{tabular}\\
    \vspace{1em}
  \begin{tabular}{cccccccc}
    \toprule
    Task & \backslashbox{Model}{Trees} & 10 & 20 & 40 & 80 & 160 & 320 \\
    \midrule
    \multirow{2}*{\texttt{friedman}} & GBM  & $57_{\pm 12}$ & $114_{\pm 20}$ & $157_{\pm 37}$ & $176_{\pm 20}$ & $189_{\pm 8}$ & $210_{\pm 9}$\\
    & RF  & $233_{\pm 22}$ & $219_{\pm 25}$ & $219_{\pm 9}$ & $201_{\pm 23}$ & $216_{\pm 13}$ & $209_{\pm 15}$\\
    \midrule
    \multirow{2}*{\texttt{heart}} & GBM  & $1_{\pm 0}$ & $3_{\pm 1}$ & $29_{\pm 4}$ & $70_{\pm 24}$ & $152_{\pm 56}$ & $321_{\pm 95}$\\
    & RF  & $62_{\pm 7}$ & $55_{\pm 3}$ & $29_{\pm 9}$ & $21_{\pm 6}$ & $14_{\pm 5}$ & $13_{\pm 2}$\\
    \bottomrule
  \end{tabular}
  \label{tab:experimens-genetic}
\end{table}

Table \ref{tab:experimens-genetic} presents the results of robustness checks for Partial Dependence of various machine learning models and complexity levels. Each value corresponds to the distance between the original explanation and the changed one; multiplied by $10^3$ in \texttt{friedman} and $10^6$ in \texttt{heart} for clarity. We perform the checks 6 times and report the $\text{mean} \pm \text{standard deviation}$. Note that we cannot compare the values between tasks, as their magnitudes depend on the prediction range. We found the explanations of NN, SVM and deep DT the most vulnerable to the fooling methods (\textbf{top} Table). In contrast, RF seems to provide robust explanations; thus, we further investigate the relationship between the tree-models' complexity and the explanation stability (\textbf{bottom} Table) to conclude that an increasing complexity yields more vulnerable explanations, which is consistent with Figure \ref{fig:experiments-gradient}. We attribute the differences between the results for RF and GBM to the concept of bias-variance tradeoff. In some cases (\texttt{heart}, RF), explanations of too simple models become vulnerable too, since underfitted models may be as uncertain as overfitted ones.

\paragraph{Ablation Study.} We further discuss the additional results that may be of interest to gain a broader context of this work. Figure \ref{fig:add-experiments-centred} presents the distinction between the robustness check for centred Partial Dependence, which is the default algorithm, and the robustness check for \emph{not centred} PD. We use the gradient-based algorithm to change the explanations of a 3 layers$\times$32 neurons ReLU neural network and perform the fooling 30 times for each subplot. We observe that centring the explanation in the attack loss definition is necessary to achieve the change in explanation shape. Alternatively, the explanation shifts upwards or downwards by essentially changing the mean of prediction. This observation was consistent across most of the models despite their complexity. 

\begin{figure}[!ht]
  \centering
  \includegraphics[width=0.85\columnwidth]{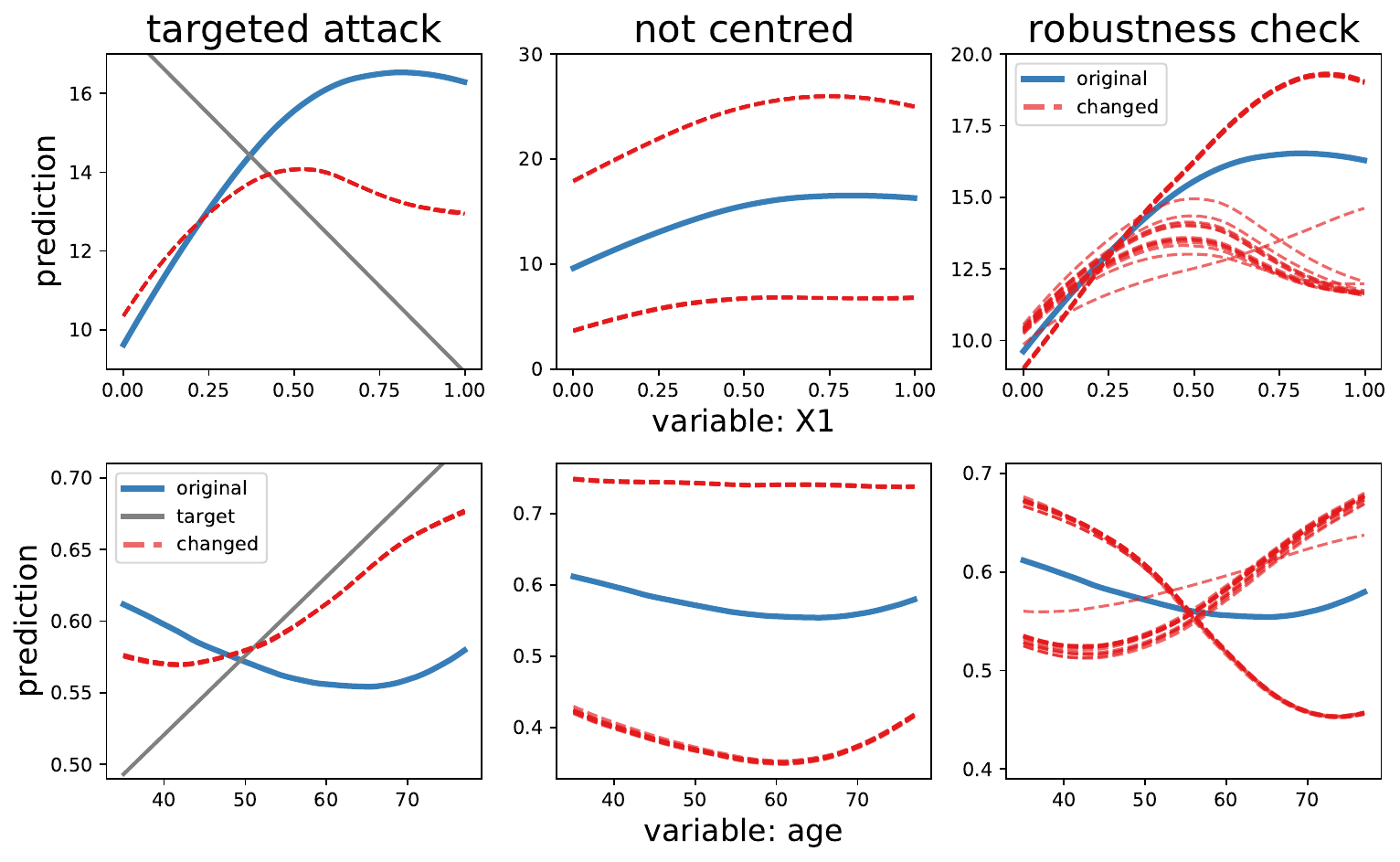}
  \caption{Fooling Partial Dependence of a 3$\times$32 neural network fitted to the \texttt{friedman} (\textbf{top} row) and \texttt{heart} (\textbf{bottom} row) datasets. We performed multiple randomly initiated gradient-based fooling algorithms on the explanations of variables $X_1$ and \texttt{age} respectively. 
  We observe that centring PD is beneficial because it stops the manipulated explanation from shifting.}\label{fig:add-experiments-centred}
\end{figure}

\begin{table}[!ht]
  \setlength{\tabcolsep}{4pt}    
  \centering
  \caption{Attack loss values of the robustness checks for PD of various ReLU neural networks. We add additional noise variables to the data before model fitting, e.g. \texttt{friedman}+2 denotes the referenced dataset with 2 additional variables sampled from the normal distribution. 
  We perform the fooling 30 times and report the $\text{mean} \pm \text{sd}$. We observe that the explanations' vulnerability greatly increases with task complexity.}
  \begin{tabular}{ccccccccc}
    \toprule
    \backslashbox{Task}{NN} & 1$\times$8 & 2$\times$8 & 3$\times$8 & 2$\times$32 & 3$\times$32 & 1$\times$128 & 3$\times$128\\   
    \midrule
    \texttt{friedman} & $25_{\pm 3}$ & $33_{\pm  0}$ & $75_{\pm  24}$ & $100_{\pm  32}$ & $98_{\pm  42}$ & $54_{\pm  15}$ & $97_{\pm  50}$\\ 
    \texttt{friedman}+1 & $31_{\pm  2}$ & $40_{\pm  4}$ & $50_{\pm  9}$ & $106_{\pm  40}$ & $115_{\pm  44}$ & $57_{\pm  15}$ & $114_{\pm  55}$\\ 
    \texttt{friedman}+2 & $34_{\pm  1}$ & $40_{\pm  10}$ & $50_{\pm  22}$ & $106_{\pm  52}$ & $115_{\pm  50}$ & $50_{\pm  15}$ & $137_{\pm  66}$\\ 
    \texttt{friedman}+4 & $46_{\pm  6}$ & $33_{\pm  0}$ & $83_{\pm  8}$ & $145_{\pm  31}$ & $163_{\pm  27}$ & $40_{\pm  5}$ & $140_{\pm  58}$\\ 
    \texttt{friedman}+8 & $71_{\pm  9}$ & $47_{\pm  3}$ & $89_{\pm  15}$ & $204_{\pm  25}$ & $176_{\pm  25}$ & $39_{\pm  6}$ & $156_{\pm  34}$\\ 
    \midrule
    \texttt{heart} & $11_{\pm  0}$ & $8_{\pm  1}$ & $10_{\pm  0}$ & $32_{\pm  3}$ & $41_{\pm  5}$ & $6_{\pm  1}$ & $134_{\pm  14}$\\ 
    \texttt{heart}+1 & $10_{\pm  1}$ & $17_{\pm  6}$ & $17_{\pm  2}$ & $44_{\pm  4}$ & $57_{\pm  13}$ & $6_{\pm  1}$ & $128_{\pm  8}$\\ 
    \texttt{heart}+2 & $13_{\pm  1}$ & $31_{\pm  13}$ & $17_{\pm  5}$ & $63_{\pm  4}$ & $79_{\pm  10}$ & $14_{\pm  2}$ & $218_{\pm  82}$\\ 
    \texttt{heart}+4 & $13_{\pm  1}$ & $21_{\pm  9}$ & $30_{\pm  17}$ & $113_{\pm  4}$ & $139_{\pm  60}$ & $29_{\pm  5}$ & $232_{\pm  36}$\\ 
    \texttt{heart}+8 & $16_{\pm  0}$ & $28_{\pm  18}$ & $43_{\pm  20}$ & $125_{\pm  49}$ & $227_{\pm  28}$ & $25_{\pm  8}$ & $311_{\pm  283}$\\ 
    \bottomrule
  \end{tabular}
  \label{tab:experimens-gradient}
\end{table}

Table \ref{tab:experimens-gradient} presents the impact of additional noise variables in data on the performed fooling. We observe that higher data dimensions favor vulnerable explanations (higher loss). The analogous results for targeted attack were consistent; however, showcased almost zero variance (partially observable in Figures \ref{fig:experiments-gradient} \& \ref{fig:add-experiments-centred}).

\paragraph{Adversarial Scenario.} Following the framework shown in Figure~\ref{fig:framework}, we consider three stakeholders apparent in explainable machine learning: developer, auditor, and prediction recipients. Let us assume that the model predicting a heart attack should not take into account a patient's \texttt{sex}; although, it might be a~valuable predictor. An auditor analyses the model using Partial Dependence; therefore, the developer supplies a poisoned dataset for this task. Figure \ref{fig:heart-sex} presents two possible outcomes of model audit: concealed and suspected, which are unequivocally bound to the explanation result and dataset. In first, the model is unchanged while the stated assumption of even dependence between \texttt{sex} is concealed (equal to about 0.5); thus, the prediction recipients become vulnerable. Additionally, we supply an alternative scenario where the developer wants to provide evidence of model unfairness to raise suspicion (dependence for class 0 equal to about 0.7).

\paragraph{Supportive Scenario.} In this work, we consider an equation of three variables: data, model, and explanation; thus, we poison the data to fool the explanation while the model remains unchanged. Figures \ref{fig:heart-age} and \ref{fig:heart-sex} showcase an exemplary data shift occurring in the dataset after the attack where changing only a few explanatory variables results in bending PD. We present a moderate change in data distribution to introduce a concept of analysing such relationships for explanatory purposes, e.g. the first result might suggest that resting blood pressure and maximum heart rate contribute to the explanation of \texttt{age}; the second result suggests how these variables contribute to the explanation of \texttt{sex}. We conclude that the data shift is worth exploring to analyse variable interactions in models. 
\section{Limitations and Future Work}\label{sec:limitations}

We find these results both alarming and informative yet proceed to discuss the limitations of the study. First is the assumption that, in an adversarial scenario, the auditor has no access to the original (unknown) data, e.g. in research or healthcare audit. While the detectability of fooling is worth analyzing, our work focuses not only on an adversarial manipulation of PD, as we sincerely hope such data poisoning is not occurring in practice. Even more, we aim to underline the crucial context of data distribution in the interpretation of explanations and introduce a new way of evaluating PD; black-box explanations by generalizing the methods with genetic-based optimization. 

Another limitation is the size of the used datasets. We have engaged with larger datasets during experiments but were turned off by a contradictory view that increasing the dataset size might be considered as exaggerating the results. PD clearly becomes more complex with increasing data dimensions; moreover, higher-dimensional space should entail more possible ways of manipulation, which is evident in the ablation study. We note that in practice, the explanations might require 100-1000 observations for the estimation (e.g. kernel SHAP, PDP), hence the size of the datasets in this study. Finally, we omit datasets like Adult and COMPAS because they mainly consist of categorical variables.

\begin{figure}
  \centering
  \includegraphics[width=0.49\textwidth]{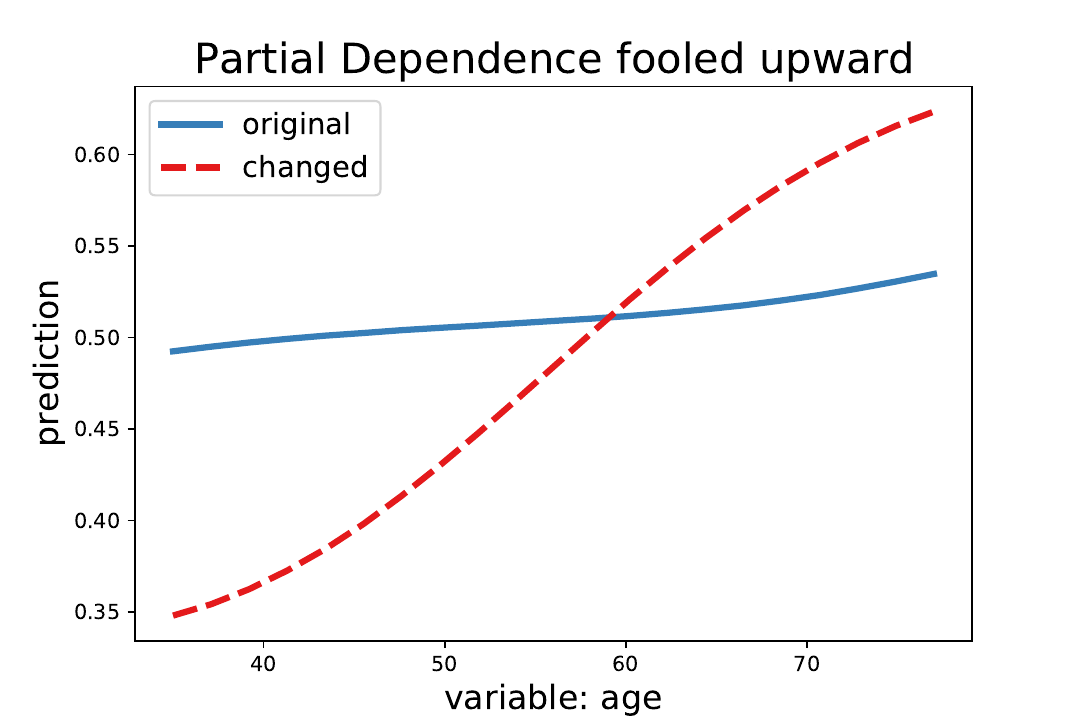}
  \includegraphics[width=0.32\textwidth]{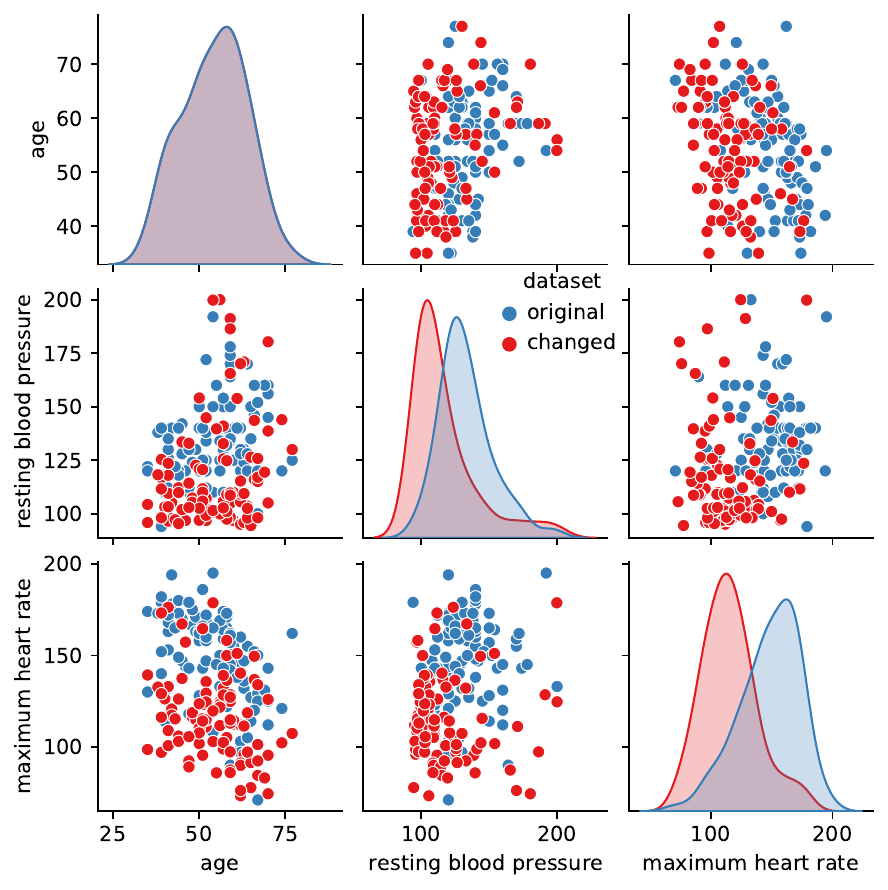}
  \includegraphics[width=0.49\textwidth]{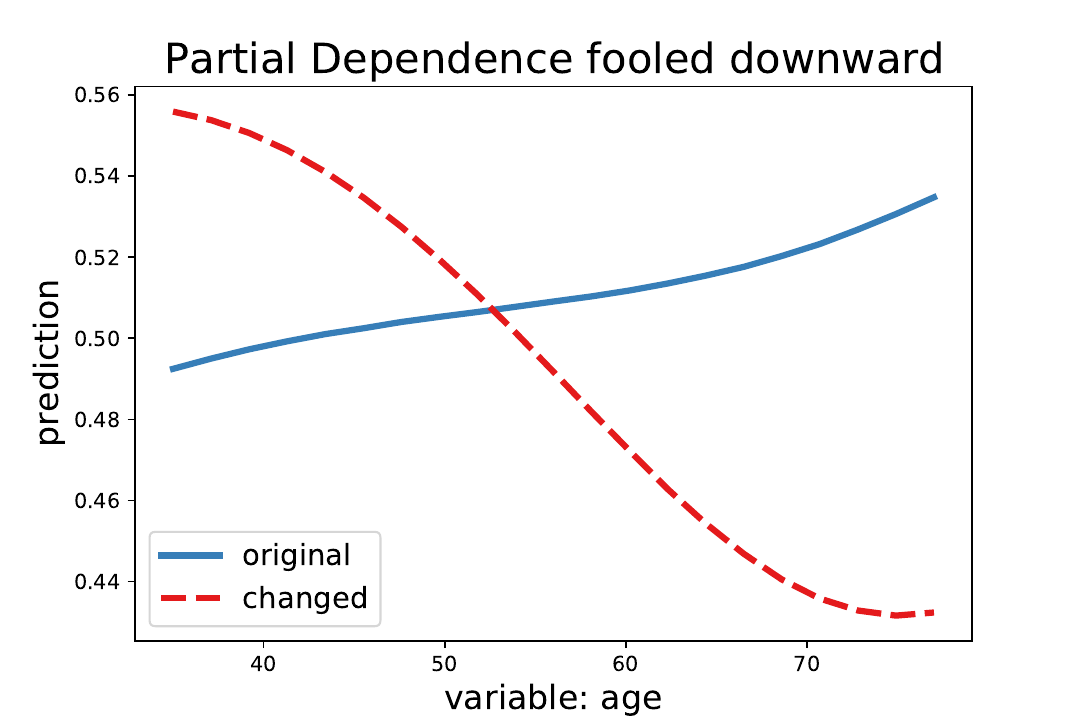}
  \includegraphics[width=0.32\textwidth]{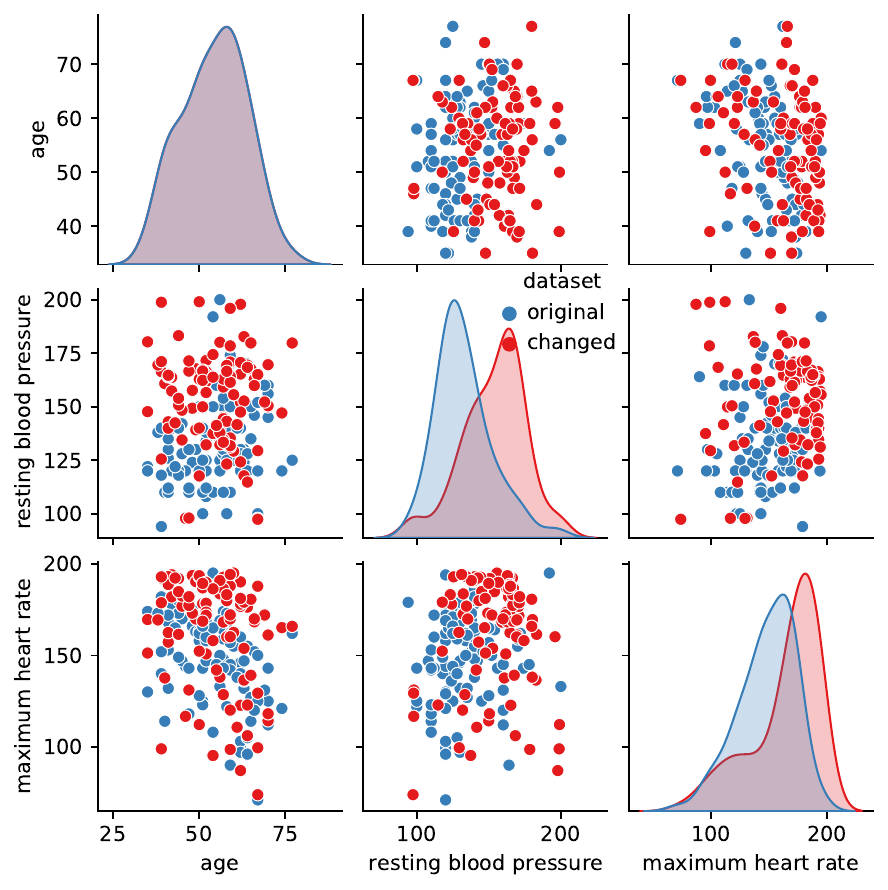}
  \caption{Partial Dependence of \texttt{age} in the SVM model prediction of a heart attack (class 0). \textbf{Left:} 
  Two manipulated explanations suggest an increasing or decreasing relationship between \texttt{age} and the predicted outcome depending on a desired outcome.
  \textbf{Right:} Distribution of the explained variable \texttt{age} and the two poisoned variables from the data, in which the remaining ten variables attributing to the explanation remain unchanged. The mean of the variables' Jensen-Shannon distance equals only 0.027 in the upward scenario and 0.021 in the downward scenario, which might seem like an insignificant change of the data distribution.}\label{fig:heart-age}
  
  \includegraphics[width=0.58\textwidth]{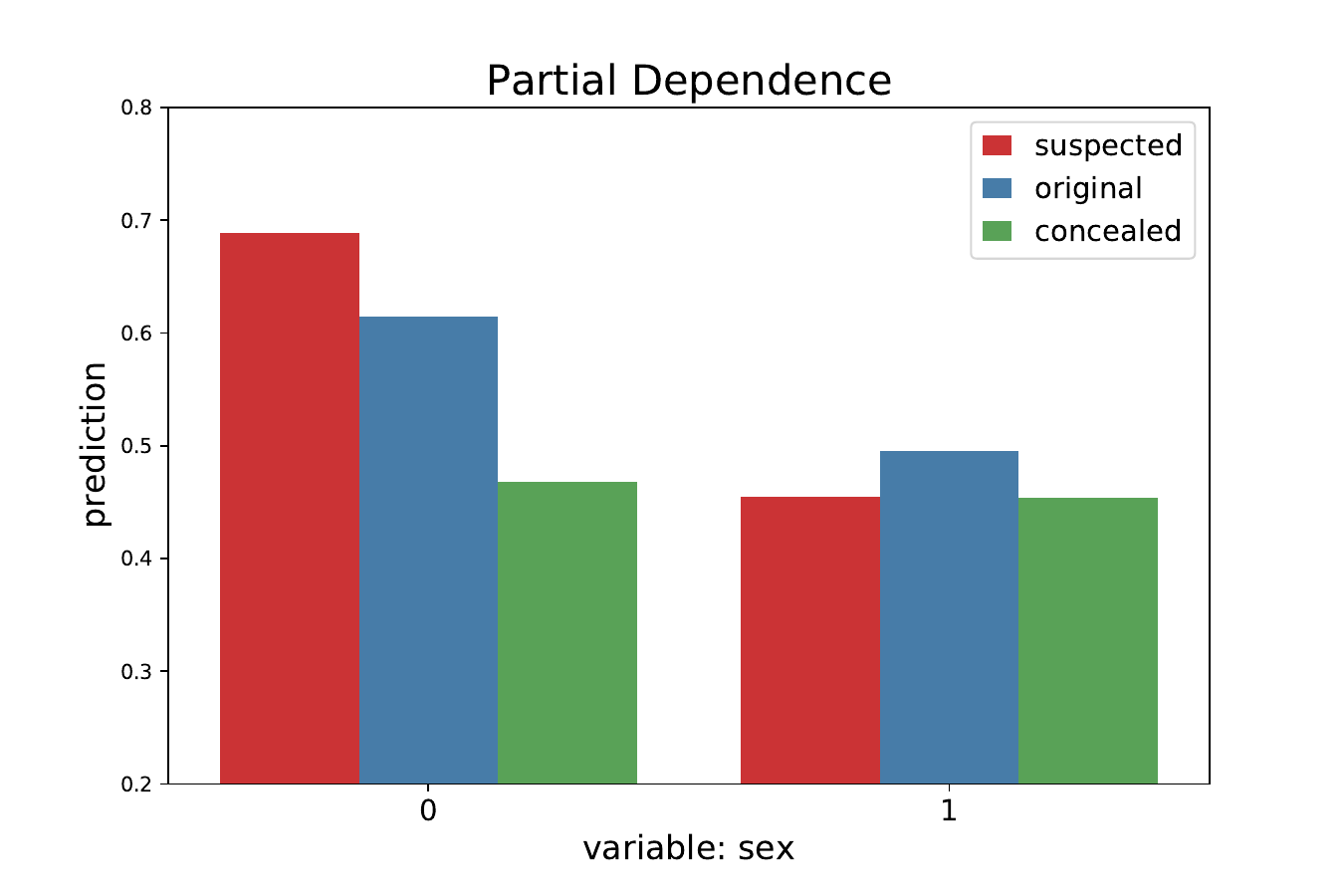}
  \includegraphics[width=0.39\textwidth]{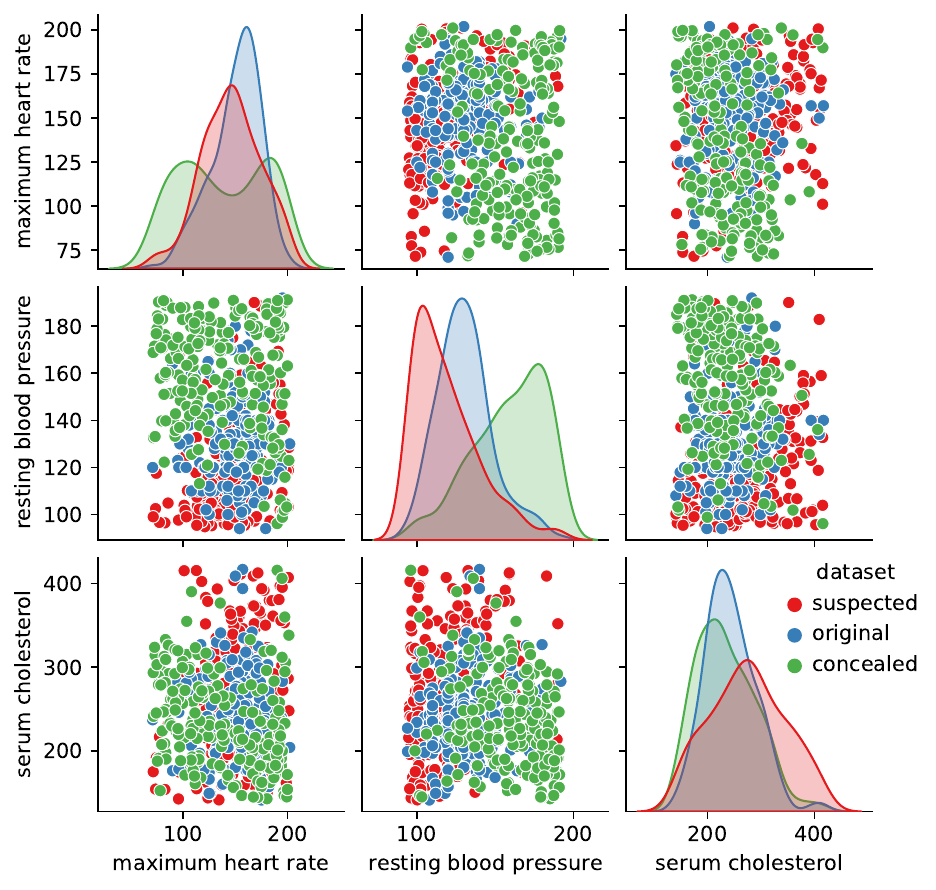}
  \caption{Partial Dependence of \texttt{sex} in the SVM model prediction of a heart attack (class~0). \textbf{Left:} Two manipulated explanations present a suspected or concealed variable contribution into the predicted outcome. \textbf{Right:} Distribution of the three poisoned variables from the data, in which \texttt{sex} and the remaining nine variables attributing to the explanation remain unchanged. The mean of the variables' Jensen-Shannon distance equals only 0.023 in the suspected scenario and 0.026 in the concealed scenario.}\label{fig:heart-sex}
\end{figure}

\paragraph{Future Work.} We foresee several directions for future work, e.g. evaluating the successor to PD -- Accumulated Local Effects (ALE) \citep{ale}; although the practical estimation of ALE presents challenges. Second, the attack loss may be enhanced by regularization, e.g. penalty for substantial change in data or mean of model's prediction, to achieve more meaningful fooling with less evidence. We focus in this work on univariate PD, but targeting bivariate PD can also be examined. Overall, the landscape of global-level, post-hoc model explanations is a broad domain, and the potential of a security breach in other methods, e.g. SHAP, should be further examined. Enhancements to the model-agnostic and explanation-agnostic genetic algorithm are thereby welcomed.

Another future direction would be to enhance the stability of PD. Rieger and Hansen \citep{defense-aggregation} present a defence strategy against the attack via data change~\citep{dombrowski-manipulated-explanations} by aggregating various explanations, which produces robust results without changing the model.

\section{Conclusion and Impact}\label{sec:conclusion}

We highlight that Partial Dependence can be maliciously altered, e.g. bent and shifted, with adversarial data perturbations. The introduced genetic-based algorithm allows evaluating explanations of any black-box model. Experimental results on various models and their sizes showcase the hidden debt of model complexity related to explainable machine learning. Explanations of low-variance models prove to be robust to the manipulation, while very complex models should not be explained with PD as they become vulnerable to change in reference data distribution. Robustness checks lead to varied modifications of the explanations depending on the setting, e.g. may propose two opposite PD, which is why it is advised to perform the checks multiple times.

This work investigates the vulnerability of global-level, post-hoc model explainability from the adversarial setting standpoint, which refers to the responsibility and security of the artificial intelligence use. Possible manipulation of PD leads to the conclusion that explanations used to explain black-box machine learning may be considered black-box themselves. These explainability methods are undeniably useful through implementations in various popular software. However, just as machine learning models cannot be developed without extensive testing and understanding of their behaviour, their explanations cannot be used without critical thinking. We recommend ensuring the reliability of the explanation results through the introduced methods, which can also be used to study models behaviour under the data shift. Code for this work is available at \url{https://github.com/MI2DataLab/fooling-partial-dependence}.

\subsubsection{Acknowledgements.} This work was financially supported
by the NCN Opus grant 2017/27/B/ST6/0130 and NCN Sonata Bis-9 grant 2019/34/E/ST6/00052.

%
%
%
\bibliographystyle{splncs04}
\bibliography{references}

\begin{thebibliography}{10}
\providecommand{\url}[1]{\texttt{#1}}
\providecommand{\urlprefix}{URL }
\providecommand{\doi}[1]{https://doi.org/#1}

\bibitem{adebayo-sanity-checks-saliency-maps}
Adebayo, J., Gilmer, J., Muelly, M., Goodfellow, I., Hardt, M., Kim, B.:
  {Sanity Checks for Saliency Maps}. In: NeurIPS (2018)

\bibitem{eval-4}
Adebayo, J., Muelly, M., Liccardi, I., Kim, B.: {Debugging Tests for Model
  Explanations}. In: NeurIPS (2020)

\bibitem{fairwashing}
Aivodji, U., Arai, H., Fortineau, O., Gambs, S., Hara, S., Tapp, A.:
  {Fairwashing: the risk of rationalization}. In: ICML (2019)

\bibitem{nn-software}
Alber, M., Lapuschkin, S., Seegerer, P., H{{\"a}}gele, M., Sch{{\"u}}tt, K.T.,
  et~al.: {iNNvestigate Neural Networks!} Journal of Machine Learning Research
  \textbf{20}(93), ~1--8 (2019)

\bibitem{ale}
Apley, D.W., Zhu, J.: {Visualizing the effects of predictor variables in black
  box supervised learning models}. Journal of the Royal Statistical Society:
  Series B (Statistical Methodology)  \textbf{82}(4),  1059--1086 (2020)

\bibitem{lrp}
Bach, S., Binder, A., Montavon, G., Klauschen, F., Müller, K.R., Samek, W.:
  {On Pixel-Wise Explanations for Non-Linear Classifier Decisions by Layer-Wise
  Relevance Propagation}. PLOS ONE  \textbf{10}(7),  1--46 (2015)

\bibitem{modelStudio}
Baniecki, H., Biecek, P.: {modelStudio: Interactive Studio with Explanations
  for ML Predictive Models}. Journal of Open Source Software  \textbf{4}(43),
  ~1798 (2019)

\bibitem{dalex-python}
Baniecki, H., Kretowicz, W., Piatyszek, P., Wisniewski, J., Biecek, P.: {dalex:
  Responsible Machine Learning with Interactive Explainability and Fairness in
  Python}. Journal of Machine Learning Research  \textbf{22}(214), ~1--7 (2021)

\bibitem{arrieta-responsible-ai}
{Barredo Arrieta}, A., Díaz-Rodríguez, N., {Del Ser}, J., Bennetot, A.,
  Tabik, S., et~al.: {Explainable Artificial Intelligence (XAI): Concepts,
  taxonomies, opportunities and challenges toward responsible AI}. Information
  Fusion  \textbf{58},  82--115 (2020)

\bibitem{eval-3}
Bhatt, U., Weller, A., Moura, J.M.F.: {Evaluating and Aggregating Feature-based
  Model Explanations}. In: IJCAI (2020)

\bibitem{dalex}
Biecek, P.: {DALEX: Explainers for Complex Predictive Models in R}. Journal of
  Machine Learning Research  \textbf{19}(84), ~1--5 (2018)

\bibitem{biecek-ema}
Biecek, P., Burzykowski, T.: {Explanatory Model Analysis}. Chapman and Hall/CRC
   (2021)

\bibitem{nn-defense-1}
Boopathy, A., Liu, S., Zhang, G., Liu, C., Chen, P.Y., Chang, S., Daniel, L.:
  {Proper Network Interpretability Helps Adversarial Robustness in
  Classification}. In: ICML (2020)

\bibitem{underspecification}
D'Amour, A., Heller, K., Moldovan, D., Adlam, B., Alipanahi, B., et~al.:
  {Underspecification Presents Challenges for Credibility in Modern Machine
  Learning}. arXiv preprint arXiv:2011.03395  (2020)

\bibitem{attack-model-fairness}
Dimanov, B., Bhatt, U., Jamnik, M., Weller, A.: {You Shouldn't Trust Me:
  Learning Models Which Conceal Unfairness From Multiple Explanation Methods.}
  In: AAAI SafeAI (2020)

\bibitem{dombrowski-manipulated-explanations}
Dombrowski, A.K., Alber, M., Anders, C., Ackermann, M., M\"{u}ller, K.R.,
  Kessel, P.: {Explanations can be manipulated and geometry is to blame}. In:
  NeurIPS (2019)

\bibitem{UCI}
Dua, D., Graff, C.: {UCI Machine Learning Repository} (2017),
  \url{https://www.kaggle.com/ronitf/heart-disease-uci/version/1}

\bibitem{5-evolutionary-algorithms}
Elbeltagi, E., Hegazy, T., Grierson, D.: {Comparison among five
  evolutionary-based optimization algorithms}. Advanced Engineering Informatics
   \textbf{19}(1),  43--53 (2005)

\bibitem{friedman-gbm-pdp}
Friedman, J.H.: {Greedy Function Approximation: A Gradient Boosting Machine}.
  Annals of Statistics  \textbf{29}(5),  1189--1232 (2001)

\bibitem{faking-fairness}
Fukuchi, K., Hara, S., Maehara, T.: {Faking Fairness via Stealthily Biased
  Sampling}. In: AAAI (2020)

\bibitem{ghorbani-fragile-nn-interpretability}
Ghorbani, A., Abid, A., Zou, J.: {Interpretation of Neural Networks Is
  Fragile}. In: AAAI (2019)

\bibitem{responsible-ml}
Gill, N., Hall, P., Montgomery, K., Schmidt, N.: {A Responsible Machine
  Learning Workflow with Focus on Interpretable Models, Post-hoc Explanation,
  and Discrimination Testing}. Information  \textbf{11}(3), ~137 (2020)

\bibitem{goldstein-ice}
Goldstein, A., Kapelner, A., Bleich, J., Pitkin, E.: {Peeking Inside the Black
  Box: Visualizing Statistical Learning With Plots of Individual Conditional
  Expectation}. Journal of Computational and Graphical Statistics
  \textbf{24}(1),  44--65 (2015)

\bibitem{pdp}
Greenwell, B.M.: {pdp: An R Package for Constructing Partial Dependence Plots}.
  The R Journal  \textbf{9}(1),  421--436 (2017)

\bibitem{heo-fooling-nn-manipulation}
Heo, J., Joo, S., Moon, T.: {Fooling Neural Network Interpretations via
  Adversarial Model Manipulation}. In: NeurIPS (2019)

\bibitem{hooker2007}
Hooker, G.: {Generalized Functional ANOVA Diagnostics for High-Dimensional
  Functions of Dependent Variables}. Journal of Computational and Graphical
  Statistics  \textbf{16}(3),  709--732 (2007)

\bibitem{eval-1}
Hooker, S., Erhan, D., Kindermans, P.J., Kim, B.: {A Benchmark for
  Interpretability Methods in Deep Neural Networks}. In: NeurIPS (2019)

\bibitem{janzing2020feature}
Janzing, D., Minorics, L., Bl{\"o}baum, P.: {Feature relevance quantification
  in explainable AI: A causal problem}. In: AISTATS (2020)

\bibitem{explanation-quality-vs-model-accuracy}
Jia, Y., Frank, E., Pfahringer, B., Bifet, A., Lim, N.: {Studying and
  Exploiting the Relationship Between Model Accuracy and Explanation Quality}.
  In: ECML PKDD (2021)

\bibitem{kindermans-unreliable-saliency-maps}
Kindermans, P.J., Hooker, S., Adebayo, J., Alber, M., Sch{\"u}tt, K.T.,
  D{\"a}hne, S., Erhan, D., Kim, B.: {The (Un)reliability of Saliency Methods}.
  In: {Explainable AI: Interpreting, Explaining and Visualizing Deep Learning},
  pp. 267--280. Springer (2019)

\bibitem{adam}
Kingma, D.P., Ba, J.: {Adam: A Method for Stochastic Optimization}. In: ICLR
  (2015)

\bibitem{misleading}
Lakkaraju, H., Bastani, O.: {"How Do I Fool You?": Manipulating User Trust via
  Misleading Black Box Explanations}. In: AIES (2020)

\bibitem{muse}
Lakkaraju, H., Kamar, E., Caruana, R., Leskovec, J.: {Faithful and Customizable
  Explanations of Black Box Models}. In: AIES (2019)

\bibitem{deeplearning}
LeCun, Y., Bengio, Y., Hinton, G.: {Deep learning}. Nature  \textbf{521}(7553),
   436--444 (2015)

\bibitem{Lipton2018}
Lipton, Z.C.: {The Mythos of Model Interpretability}. Queue  \textbf{16}(3),
  31--–57 (2018)

\bibitem{shap}
Lundberg, S.M., Lee, S.I.: {A Unified Approach to Interpreting Model
  Predictions}. In: NeurIPS (2017)

\bibitem{adversarial-robustness}
Mangla, P., Singh, V., Balasubramanian, V.N.: {On Saliency Maps and Adversarial
  Robustness}. In: ECML PKDD (2020)

\bibitem{Miller2019}
Miller, T.: {Explanation in artificial intelligence: Insights from the social
  sciences}. Artificial Intelligence  \textbf{267},  1--38 (2019)

\bibitem{iml}
Molnar, C., Casalicchio, G., Bischl, B.: {iml: An R package for Interpretable
  Machine Learning}. Journal of Open Source Software  \textbf{3}(26), ~786
  (2018)

\bibitem{lime}
Ribeiro, M.T., Singh, S., Guestrin, C.: {“Why Should I Trust You?”:
  Explaining the Predictions of Any Classifier}. In: KDD (2016)

\bibitem{defense-aggregation}
Rieger, L., Hansen, L.K.: {A simple defense against adversarial attacks on
  heatmap explanations}. In: ICML WHI (2020)

\bibitem{Rudin2019}
Rudin, C.: {Stop Explaining Black Box Machine Learning Models for High Stakes
  Decisions and Use Interpretable Models Instead}. Nature Machine Intelligence
  \textbf{1},  206--215 (2019)

\bibitem{gradcam}
Selvaraju, R.R., Cogswell, M., Das, A., Vedantam, R., Parikh, D., Batra, D.:
  {Grad-CAM: Visual Explanations from Deep Networks via Gradient-Based
  Localization}. International Journal of Computer Vision  \textbf{128}(2),
  336--359 (2020)

\bibitem{input-gradient}
Shrikumar, A., Greenside, P., Kundaje, A.: {Learning Important Features Through
  Propagating Activation Differences}. In: ICML (2017)

\bibitem{saliency1}
Simonyan, K., Vedaldi, A., Zisserman, A.: {Deep Inside Convolutional Networks:
  Visualising Image Classification Models and Saliency Maps}. In: ICLR (2014)

\bibitem{slack-fooling-lime-shap}
Slack, D., Hilgard, S., Jia, E., Singh, S., Lakkaraju, H.: {Fooling LIME and
  SHAP: Adversarial Attacks on Post Hoc Explanation Methods}. In: AIES (2020)

\bibitem{slack2021counterfactual}
Slack, D., Hilgard, S., Lakkaraju, H., Singh, S.: {Counterfactual Explanations
  Can Be Manipulated}. In: NeurIPS (2021)

\bibitem{poisoning-fairness}
Solans, D., Biggio, B., Castillo, C.: {Poisoning Attacks on Algorithmic
  Fairness}. In: ECML PKDD (2020)

\bibitem{integrated-gradient}
Sundararajan, M., Taly, A., Yan, Q.: {Axiomatic Attribution for Deep Networks}.
  In: ICML (2017)

\bibitem{nn-defense-2}
Wang, Z., Wang, H., Ramkumar, S., Mardziel, P., Fredrikson, M., Datta, A.:
  {Smoothed Geometry for Robust Attribution}. In: NeurIPS (2020)

\bibitem{eval-2}
Warnecke, A., Arp, D., Wressnegger, C., Rieck, K.: {Evaluating Explanation
  Methods for Deep Learning in Security}. In: IEEE EuroS\&P (2020)

\bibitem{real-ga}
Wright, A.H.: {Genetic Algorithms for Real Parameter Optimization}. Foundations
  of Genetic Algorithms  \textbf{1},  205--218 (1991)

\bibitem{zhang-interpretable-dl-under-fire}
Zhang, X., Wang, N., Shen, H., Ji, S., Luo, X., Wang, T.: {Interpretable Deep
  Learning under Fire}. In: USENIX Security (2020)

\bibitem{zhao2019causal}
Zhao, Q., Hastie, T.: {Causal Interpretations of Black-Box Models}. Journal of
  Business \& Economic Statistics  \textbf{39}(1),  272--281 (2019)

\end{thebibliography}

\appendix

\section{Genetic-based Algorithm}\label{appendix:genetic}

Attacks on PD in both strategies include a similar Algorithm \ref{alg:genetic-based-algorithm}. The main idea is defining an individual as an instance of the dataset, iteratively perturb its values to achieve the desired explanation target, or perform the robustness check to observe the change. These individuals are initialized with a value of original dataset $X'$ to form a population $P$. Subsequently, the initialization ends with mutating $P$ using a higher-than-default variance of perturbations. Then, in each iteration, they are randomly crossed, mutated, evaluated with the loss function, and selected based on the evaluation. The algorithm stops after a defined number of repetitions, and the best individual, with its corresponding explanation, is the result. The initialized population moves to the crossover phase.

\begin{algorithm}
 \KwData{$f,\;X',\;g,\;c,\;T,\;C $}
 \KwResult{$g_c(X),\;X$}
 initialize $P$ \;
 \While{$iteration < max\_iterations$}{
  crossover phase (Algorithm \ref{alg:crossover}) \;
  mutation phase (Algorithm \ref{alg:mutation}) \;
  evaluation phase (Algorithm \ref{alg:evaluation}) \;
  \If{$iteration < max\_iterations - 1$}{
    selection phase (Algorithm \ref{alg:selection}) \;
  }
 }
 $X \gets \operatorname*{argmin}_{x\in P}{\mathcal{L}(x)} $\;
 \caption{Data poisoning using a genetic-based algorithm.}
 \label{alg:genetic-based-algorithm}
\end{algorithm}

The crossover presented in Algorithm \ref{alg:crossover} swaps columns between parent individuals to produce new ones. The proportion of the population which becomes parents is parameterized by $crossover\_ratio$, and the parent pairs are randomly sampled without replacement from the subset $P_{crossover\_ratio}$. For each pair, the set of variable columns (full dataset) to swap is randomly selected, and becomes a~newly created individual $q$. Such constructed childs $Q$ are added to the population. The enlarged population moves to the mutation phase.

\begin{algorithm}
 \KwData{$P,\;crossover\_ratio$}
 \KwResult{$P$}
 $R \gets P_{crossover\_ratio}$\;
 $Q \gets \{\} $\;
 \While{there are individuals in $R$}{
    $n,\;m \gets$ sample a pair of individuals without replacement from $R$ \;
    $q \gets$ create a new individual from the randomly selected columns of $n$ and $m$ \;
    $Q \gets Q \cup q $ \; 
 }
 $ P \gets P \cup Q$ \;
 \caption{Crossover phase.}
 \label{alg:crossover} 
\end{algorithm}

The mutation presented in Algorithm \ref{alg:mutation} adds Gaussian noise to the individuals (datasets) using the variables' standard deviations $std(X')$, which results in a changed population $P$. These standard deviations are scaled by the $std\_ratio$ parameter to lower the variance of noise. There is a possibility to constraint the changes in the datasets only to the original range of variable values. Then, the potential incorrect values that might occur, are substituted with new ones from the uniform distribution of the range between the original dataset value and the boundaries. It is also practicable to treat chosen elements of the dataset as constant. The mutated population moves to the evaluation phase.

\begin{algorithm}
 \KwData{$P,\;X',\;C,\;std\_ratio,\;mutation\_with\_constraints$}
 \KwResult{$P$}
 \For{each individual $m \in P$}{
    \tcp{Gaussian noise with mean $0$}
    $\theta \gets noise(std(X') \boldsymbol{\cdot} std\_ratio)$ \;
    $mask \gets create\_mask(X',\;C)$\;
    $m \gets m + \theta \boldsymbol{\cdot} mask $\;
    \If{mutation\_with\_constraints}{
        \For{each column $v \in m$}{
            find values which are out of the original range $\left[min\left(v\right), max\left(v\right)\right]$ \;
            sample new values from the uniform distribution $U\left[min\left(v\right),\; v\_i\right]$ or $U\left[v\_i,\;max\left(v\right)\right]$ \;
            substitute the out-of-range values \;
        }
    }
 }
 \caption{Mutation phase.}
 \label{alg:mutation}
\end{algorithm}

The evaluation presented in Algorithm \ref{alg:evaluation} uses the Attack loss function; thus, depends on the strategy $s$. For the robustness check $r$ we use the original dataset $X'$ to calculate the loss $\mathcal{L}^{g,\;r}$, while for the targeted attack $t$ we require $T$ in $\mathcal{L}^{g,\;t}$. Genetic algorithms usually maximize the fitness function, but we decided to minimize the loss function so that both considered algorithms are similar. Algorithm \ref{alg:evaluation} returns loss values $l$ for each individual which are passed to the selection phase.

\begin{algorithm}
 \KwData{$P,\;X',\;g,\;c,\;s,\;T$}
 \KwResult{$l$ \, \tcp{loss calculated for each individual from the population}} 
 $l \gets \{\} $\;
 \For{each individual $m \in P$}{
    \tcp{$\mathcal{L}^{g,\;r}$ uses $X'$, while $\mathcal{L}^{g,\;t}$ uses $T$}
    $l_m \gets \mathcal{L}^{g,\;s}(m)$ 
 }    
 \caption{Evaluation phase.}
 \label{alg:evaluation}
\end{algorithm}

The selection presented in Algorithm \ref{alg:selection} uses the rank selection algorithm to reduce the number of individuals to the $pop\_count$ starting number and ensure attack convergence. Rank selection uses the probability of survival of each individual, which depends on their ranking based on the corresponding loss values $l$. We added fundamental elitism to the selection algorithm, meaning that in each iteration, we guarantee several best individuals to remain into the next population. This addition ensures that the genetic-based attack's solution quality will not decrease from one iteration to the next. The cycle continues until \textit{$max\_iter$} iterations are reached, and the best individual is selected.

\begin{algorithm}
 \KwData{$P,\;l,\;pop\_count,\;elitism\_count$}
 \KwResult{$P$}
 $P' \gets P$ ordered by the values of $l$ \;
 $E \gets elitism\_count$ best individuals from $P$ \;
 $prob \gets rank(P')$ \;
 $P \gets sample(P',\;prob,\;pop\_count)$ \;
 $P \gets P \cup E$ \;
 \caption{Selection phase.}
 \label{alg:selection}
\end{algorithm}

\clearpage 

\section{Gradient-based Algorithm}\label{appendix:gradient}

Gradient-based algorithm uses gradient of the attack loss, which can be further enhanced by optimizers such as Adam. The implementation relies on automatic differentiation, which is easily accessible using nowadays tools like PyTorch or TensorFlow. Next, we provide Equations \ref{eq:eq1} and \ref{eq:eq2} of the attack loss derivatives used in the algorithm. When calculating the derivative of $\mathcal{PD}_c$, we assume that the explained variable $c$ is constant; thus, we denote the gradient as $\nabla_{X_{-c}}$.

\begin{lemma}[Derivative of $\mathcal{L}^{\mathcal{PD},\;t}$ and $\mathcal{L}^{\mathcal{PD},\;r}$]\label{derivative-loss-first-term-pd}
Let $f:\mathbb{R}^{N \times P} \longrightarrow \mathbb{R}^{N}$ represents the differentiable function that is explained by $\mathcal{PD}$. Let $Z$ be the set of points used to calculate $\mathcal{PD}$. Let $T:Z\rightarrow \mathbb{R}$. Finally, let $X'\in \mathbb{R}^{N \times P}$ be the original dataset. Then
\begin{equation} \label{eq:eq1}
\begin{split}
\nabla_{X_{-c}} & \mathcal{L}^{\mathcal{PD},\;t}(X) = \frac{2}{N|Z|}\sum_{z\in Z}\nabla_{X_{-c}} f(X^{c|=z}) \boldsymbol{\cdot} \left(\mathcal{PD}_c(X, z) - T(z)\right), \\
\nabla_{X_{-c}} & \mathcal{L}^{\mathcal{PD},\;r}(X) = -\frac{2}{N|Z|}\sum_{z\in Z}\nabla_{X_{-c}} f(X^{c|=z}) \boldsymbol{\cdot} \left(\mathcal{PD}_c(X, z) - \mathcal{PD}_c(X', z)\right).
\end{split}
\end{equation}
\end{lemma}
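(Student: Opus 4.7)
The plan is a direct chain-rule calculation starting from the definition of the squared $L_2$ loss and the PD estimator, since no additional structural assumptions beyond differentiability of $f$ are needed. First I would expand the targeted loss as $\mathcal{L}^{\mathcal{PD},\;t}(X) = \frac{1}{|Z|}\sum_{z\in Z}\bigl(\mathcal{PD}_c(X,z) - T(z)\bigr)^{2}$, noting that $T$ has no dependence on $X$. Applying $\nabla_{X_{-c}}$ termwise and using the chain rule on the square yields a factor $2\bigl(\mathcal{PD}_c(X,z) - T(z)\bigr)\,\nabla_{X_{-c}}\mathcal{PD}_c(X,z)$ for each $z$.

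Next I would substitute the estimator $\mathcal{PD}_c(X,z) = \tfrac{1}{N}\sum_{i=1}^{N} f(X_i^{c|=z})$ and pass the gradient through the finite sum by linearity, giving $\nabla_{X_{-c}}\mathcal{PD}_c(X,z) = \tfrac{1}{N}\sum_{i=1}^{N}\nabla_{X_{-c}} f(X_i^{c|=z})$. Observe that $f(X_i^{c|=z})$ depends only on the $i$-th row of $X$ (with its $c$-th entry fixed to $z$), so each summand $\nabla_{X_{-c}} f(X_i^{c|=z})$ is supported on the $i$-th row of the $N\times P$ gradient matrix; assembled row-by-row across $i$, this stacked object is precisely what the lemma abbreviates as $\nabla_{X_{-c}} f(X^{c|=z})$. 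Combining the two chain-rule factors, the $\tfrac{1}{N}$ from the estimator and the $2$ from differentiating the square yield the overall prefactor $\tfrac{2}{N|Z|}$ in equation \ref{eq:eq1}.

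For the robustness check, I would repeat the same argument with $T(z)$ replaced by $\mathcal{PD}_c(X',z)$ and an overall minus sign. The key observation here is that $X'$ is the fixed original dataset, so $\mathcal{PD}_c(X',z)$ is constant with respect to the optimization variable $X$; hence its gradient vanishes and only the chain rule on $\mathcal{PD}_c(X,z)$ survives, closing the derivation identically.

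The main obstacle is not analytical but notational: one must verify that the compact symbol $\nabla_{X_{-c}} f(X^{c|=z})$ in the statement is interpreted as the $N\times P$ gradient obtained by stacking the per-instance row-gradients $\nabla_{X_{-c}} f(X_i^{c|=z})$, which is what makes the single sum over $z$ consistent with the double sum (over $z$ and $i$) that the estimator actually produces, and why the $1/N$ remains visible in the prefactor rather than being absorbed into a second summation. Once this convention is fixed, everything reduces to the chain rule and linearity of differentiation.
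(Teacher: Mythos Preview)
Your proposal is correct and follows essentially the same route as the paper: expand the squared loss, apply the chain rule, substitute the PD estimator, and use that $f(X_k^{c|=z})$ depends only on row $k$ so the sum over rows collapses. The only cosmetic difference is that the paper carries out the computation entrywise with $\partial/\partial X_{i,j}$ rather than with the full gradient $\nabla_{X_{-c}}$, which is precisely the notational reconciliation you already identified in your final paragraph.
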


\begin{proof}
We derive the formula for the targeted attack, while a formula for the robustness check is derived by analogy. We calculate the derivative with respect to a particular value $X_{i,j}$ in the dataset. Note that we want to leave column $c$ intact, so $j\neq c$. $T(z)$ is independent of $X_{i,j}$, so it is dropped during the differentation. We have

\begin{equation*}
    \frac{\partial \mathcal{L}^{\mathcal{PD},\;t}(X)}{\partial X_{i,j}} = \frac{\partial}{\partial X_{i,j}} \frac{1}{|Z|}\sum_{z\in Z}\left(\mathcal{PD}_c(X, z)-T(z)\right)^2 =
\end{equation*}
\begin{equation*}
    \frac{2}{|Z|}\sum_{z\in Z}\left(\mathcal{PD}_c(X, z) - T(z)\right)\frac{\partial}{\partial X_{i,j}}\frac{1}{N}\sum_{k=1}^N f(X_k^{c|=z}) =
\end{equation*}
\begin{equation*}
    \frac{2}{N|Z|}\sum_{z\in Z}\left(\mathcal{PD}_c(X, z) - T(z)\right)\frac{\partial}{\partial X_{i,j}}f(X_i^{c|=z}).
\end{equation*}
\end{proof}

\begin{lemma}[Derivative of $\mathcal{L}^{\mathcal{\overline{PD}},\;r}$]\label{derivative-loss-first-term-pdp-centred}
Let $f:\mathbb{R}^{N \times P} \longrightarrow \mathbb{R}^{N}$ represents the differentiable function that is explained by $\mathcal{PD}$. Let $Z$ be the set of points used to calculate $\mathcal{PD}$. Let $\mathcal{\overline{PD}}$ denote the centred $\mathcal{PD}$, which is obtained by substracting the mean in each point. Finally, let $X'\in \mathbb{R}^{N \times P}$ be the original dataset. Then
\begin{equation} \label{eq:eq2}
\begin{split}
\nabla_{X_{-c}} & \mathcal{L}^{\mathcal{\overline{PD}},\;r}(X) = - \frac{2}{N|Z|}\sum_{z\in Z}\left(\nabla_{X_{-c}} f(X^{c|=z})-\frac{\sum_{z'\in Z}\nabla_{X_{-c}} f(X^{c|=z'})}{|Z|}\right) \\ & \boldsymbol{\cdot} \left(\mathcal{\overline{PD}}_c(X, z) - \mathcal{\overline{PD}}_c(X', z)\right).
\end{split}
\end{equation}

\end{lemma}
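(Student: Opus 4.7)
The plan is to differentiate the squared norm via the chain rule, pushing the gradient through the centering operation (which is linear) and then reusing the estimator-level gradient computation that already underlies Lemma \ref{derivative-loss-first-term-pd}. Since $X'$ is constant, only the terms involving $\mathcal{\overline{PD}}_c(X, z)$ contribute, and the centering produces the characteristic ``gradient minus its $Z$-average'' factor that appears in \eqref{eq:eq2}.

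Concretely, I would proceed in four steps. First, I would expand
\[
\mathcal{L}^{\mathcal{\overline{PD}},\;r}(X) = -\frac{1}{|Z|}\sum_{z \in Z}\bigl(\mathcal{\overline{PD}}_c(X, z) - \mathcal{\overline{PD}}_c(X', z)\bigr)^{2}
\]
using the definition of the mean-squared $L_2$ distance, and apply the ordinary chain rule to obtain
\[
\nabla_{X_{-c}} \mathcal{L}^{\mathcal{\overline{PD}},\;r}(X) = -\frac{2}{|Z|}\sum_{z \in Z}\bigl(\mathcal{\overline{PD}}_c(X, z) - \mathcal{\overline{PD}}_c(X', z)\bigr)\,\nabla_{X_{-c}} \mathcal{\overline{PD}}_c(X, z),
\]
where the $X'$ term vanishes under the gradient. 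Second, I would differentiate the centering
\[
\mathcal{\overline{PD}}_c(X, z) = \mathcal{PD}_c(X, z) - \frac{1}{|Z|}\sum_{z' \in Z}\mathcal{PD}_c(X, z')
\]
termwise, using linearity of $\nabla_{X_{-c}}$, to reduce everything to gradients of the uncentered $\mathcal{PD}$ estimator. Third, I would quote (or reuse) the calculation from the proof of Lemma \ref{derivative-loss-first-term-pd}, which gives $\nabla_{X_{-c}} \mathcal{PD}_c(X, z) = \tfrac{1}{N}\nabla_{X_{-c}} f(X^{c|=z})$ after commuting the gradient with the finite sum over observations defining the estimator. Fourth, I would substitute back to recover
\[
\nabla_{X_{-c}} \mathcal{\overline{PD}}_c(X, z) = \frac{1}{N}\left(\nabla_{X_{-c}} f(X^{c|=z}) - \frac{1}{|Z|}\sum_{z' \in Z} \nabla_{X_{-c}} f(X^{c|=z'})\right),
\]
plug this into the chain-rule expression above, and collect the $\tfrac{2}{N|Z|}$ prefactor.

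There is no real obstacle beyond careful bookkeeping: the integrand is smooth in $X_{-c}$ by the differentiability hypothesis on $f$, the sums over $i \in \{1, \dots, N\}$ and $z \in Z$ are finite so gradient/sum interchange is automatic, and the centering is linear. The only place where one must be precise is distinguishing the dummy summation variable $z'$ inside $\mathcal{\overline{PD}}_c$ from the outer index $z$ in the chain-rule sum, so that the ``minus the $Z$-average of $\nabla_{X_{-c}} f$'' factor appears exactly once and in the correct position, matching \eqref{eq:eq2}.
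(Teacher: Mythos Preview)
Your proposal is correct and follows essentially the same route as the paper's proof: expand the mean-squared distance, apply the chain rule (dropping the $X'$ term), differentiate through the linear centering, and use the estimator-level identity $\partial_{X_{i,j}}\mathcal{PD}_c(X,z)=\tfrac{1}{N}\partial_{X_{i,j}}f(X_i^{c|=z})$ to extract the $1/N$ factor. The only cosmetic difference is that the paper carries out the computation componentwise in $\partial/\partial X_{i,j}$ rather than in the aggregated $\nabla_{X_{-c}}$ notation you use.
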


\begin{proof}\label{proof-derivative-loss-first-term-pdp-centred}
We calculate the derivative with respect to a particular value $X_{i,j}$ in the dataset. Please note, that we want to leave column $c$ intact, so $j\neq c$. $\mathcal{\overline{PD}}_c(X',\;z)$ is independent of $X_{i,j}$ so it is dropped during the differentation. We have

\begin{equation*}
    \frac{\partial \mathcal{L}^{\mathcal{\overline{PD}},\;r}(X)}{\partial X_{i,j}} = - \frac{\partial}{\partial X_{i,j}} \frac{1}{|Z|}\sum_{z\in Z}\left(\mathcal{\overline{PD}}_c(X,\;z) - \mathcal{\overline{PD}}_c(X',\;z) \right)^2 =
\end{equation*}
\begin{equation*}
    - \frac{2}{|Z|}\sum_{z\in Z}\left(\mathcal{\overline{PD}}_c(X,\;z) - \mathcal{\overline{PD}}_c(X',\;z)\right)
\end{equation*}
\begin{equation*}   
    \boldsymbol{\cdot}  \left(\frac{1}{N}\sum_{k=1}^N\frac{\partial}{\partial X_{i,j}}f(X_k^{c|=z}) - \frac{1}{|Z|}\sum_{z'\in Z}\frac{1}{N}\sum_{k=1}^N \frac{\partial}{\partial X_{i,j}}f(X_k^{c|=z'})\right) =
\end{equation*}
\begin{equation*}
    - \frac{2}{N|Z|}\sum_{z\in Z}\left(\mathcal{\overline{PD}}_c(X,\;z) - \mathcal{\overline{PD}}_c(X',\;z)\right)
\end{equation*}
\begin{equation*}
    \boldsymbol{\cdot} \left(\frac{\partial}{\partial X_{i,j}}f(X_i^{c|=z}) - \frac{1}{|Z|}\sum_{z'\in Z}\frac{\partial}{\partial X_{i,j}}f(X_i^{c|=z'})\right)\text{.}
\end{equation*}
\end{proof}

Algorithm \ref{alg:gradient-based-algorithm} shows the straightforward gradient-based idea of attack loss optimization. Additional arguments passed to this algorithm depend on the chosen optimizer method. Learning rate is crucial in this case. It is denoted as $\eta$ and it controls a step size in each iteration. Greater learning rate usually results in faster algorithm convergence but may lead to worse solutions, wherein the worst-case scenario, the algorithm might not converge at all. On the other hand, a lower learning rate may result in too slow convergence and not optimal result. We use learning rate equal to $0.01$ by default. It is simple to set some of the columns in the dataset, or even particular values, as constants. It means that they are not considered arguments of the loss function; thus, they do not change during the attack. It is done simply by setting to $0$ partial derivatives corresponding to chosen values in the gradient of loss function. Note, that explained column $c$ is constant, thus its partial derivative is always set to $0$.

\begin{algorithm}
 \KwData{$f,\;X',\;g,\;c,\;T,\;C$}
 \KwResult{$g_c(X),\;X$}
 $X \gets X' + noise()$\;
 \While{iteration < max\_iterations}{
    \tcp{$\mathcal{L}^{g,\;r}$ uses $X'$, while $\mathcal{L}^{g,\;t}$ uses $T$}
    $l \gets \mathcal{L}^{g,\;s}(X,\;f)$\;
    $l \gets set\_to\_0(l,\;C)$\;
    $X \gets X - \eta \boldsymbol{\cdot} Adam(l)$\;
 }
 \caption{Data poisoning using a gradient-based algorithm.}
 \label{alg:gradient-based-algorithm}
\end{algorithm}

\end{document}